\documentclass{article}

\usepackage{arxiv}

\usepackage[utf8]{inputenc} 
\usepackage[T1]{fontenc}    
\usepackage{hyperref}       
\usepackage{url}            
\usepackage{booktabs}       
\usepackage{amsfonts}       
\usepackage{nicefrac}       
\usepackage{microtype}      
\usepackage{lipsum}
\usepackage{graphicx}  
\usepackage{amsmath}
\usepackage{amsthm,appendix}
\usepackage{caption}

\title{Near-Optimal Path Planning for Complex Robotic Inspection Tasks}

\newcommand{\afil}{Faculty of Applied Engineering\\ University of Antwerp\\Antwerp, Belgium\\}

\author{
  Boris Bogaerts\\
  \afil
  \texttt{boris.bogaerts@uantwerpen.be} \\
   \And
   Seppe Sels\\
   \afil
   \texttt{seppe.sels@uantwerpen.be}
   \AND
  Steve Vanlanduit \\
  \afil
  \texttt{steve.vanlanduit@uantwerpen.be} \\
  \And
  Rudi Penne \\
  \afil
  \texttt{rudi.penne@uantwerpen.be} \\
}

\usepackage{algorithm}
\usepackage{algpseudocode}
\usepackage{mathabx}
\usepackage{amssymb}
\usepackage{hyperref}

\newtheorem{proposition}{Proposition}

\usepackage{amsmath}               
  {
      \theoremstyle{plain}
      \newtheorem{assumption}{Assumption}
  }
\newcommand{\optionalBreak}{}

\begin{document}
\maketitle

\begin{abstract}
In this paper we consider the problem of generating inspection paths for robots. These paths should allow an attached measurement device to perform high quality measurements. We formally show that generating robot paths, while maximizing the inspection quality, naturally corresponds to the submodular orienteering problem. Traditional methods that are able to generate solutions with mathematical guarantees do not scale to real world problems. In this work we propose a method that is able to generate near-optimal solutions for real world complex problems. We experimentally test this method in a wide variety of inspection problems and show that it nearly always outperforms traditional methods. We furthermore show that the near-optimality of our approach makes it more robust to changing the inspection problem, and is thus more general.
\end{abstract}

\keywords{Robotic inspection \and Inspection planning \and Submodular orienteering \and Wind turbine inspection \and Drone inspection}

\begin{figure}[H]
    \centering
     \href{https://youtu.be/Fg-ulGRyw2w}{
    \includegraphics[width=0.85\textwidth]{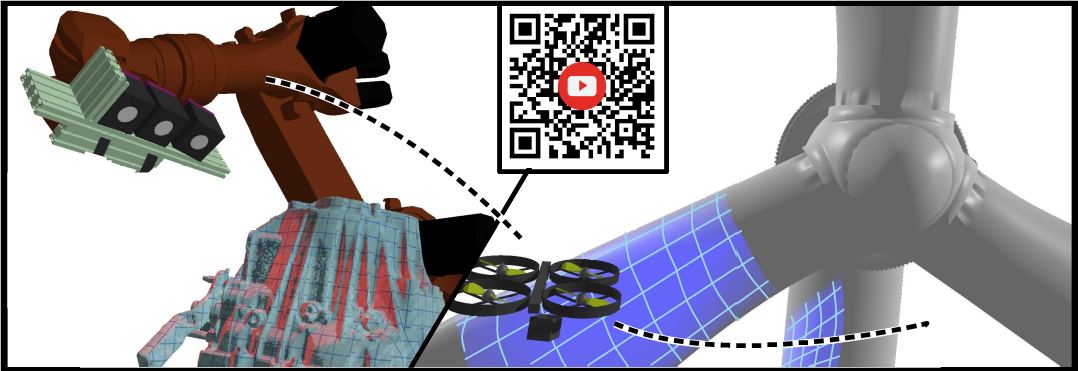}}
    \caption{A 360vr video experience that explains and visualizes this work is available online (https://youtu.be/Fg-ulGRyw2w). This video can be watched on a regular computer, or on a smartphone, but the optimal experience requires a virtual reality headset. Click this figure or scan the QR code to get redirected.}
\end{figure}

\section{Introduction}

In robotic inspection, a measurement device (e.g. a camera) is attached to a robot. The goal of inspection planning is then to find an efficient robot trajectory that allows the attached measurement device to perform high quality, complete measurements. The abstract structure of this problem is shared by many real-world inspection tasks featuring different robots and measurement devices. Examples include but are not limited to:
\begin{itemize}
    \item Inspection of wind turbines with a drone using optical cameras \citep{bircher2016three}.
    \item Inspection of solar power plants with a drone using thermal cameras \citep{quater2014light}.
    \item Inspection of aircraft structures with a robotic manipulator and active thermography \citep{peeters2019optimized}.
    \item Inspection of ships hulls using Autonomous Underwater Vehicles \citep{englot2012sampling}.
\end{itemize}

The wealth of practical applications for automated inspection algorithms generated academic interest, making it a well-studied problem. The literature on inspection planning generated two major algorithmic approaches. The first class of algorithms formally model the inspection planning problem as an optimization problem, and algorithms are constructed to generate solutions that score well on this optimization problem. A major issue with these algorithms is that the abstract problem of inspection planning is a very challenging combination of NP-hard problems as we will discuss later. This results in algorithms that can typically only solve small scale problems (i.e. toy problems). The other class of algorithms originated from a more pragmatic view and try to solve more realistically complex problems. These algorithms typically focus on concrete specific problems, but often fail to generalize to other related problems (e.g. different robot, different measurement devices). Another disadvantage is that there is no guarantee on the quality of the solutions that are returned by the algorithm. 

In this work, we will construct an automatic inspection planning algorithm that aims to connect both classes of algorithms. This algorithm generates near-optimal solutions, while at the same time it is able to solve realistically large scale complex inspection problems. Another advantage of our algorithm is that it is general enough to solve a large variety of inspection planning problems. The proposed algorithm also generates a bound on the maximally achievable quality which is valuable in the comparison with other algorithms.

\section{Related work}
Early works that focus on practical aspects of the inspection planning problem are concerned with the next-best-view planning problem \citep{pito1999solution}. These articles typically focus on the construction of functions that model the quality of a measurement \citep{pito1999solution, scott2009model, trummer2010online}. However to find a solution to the inspection planning problem, the proposed methods use a two-step procedure that separates the viewpoint selection and path planning problem. As the name suggests, the planning horizon is also limited to one step ahead planning. There are no mathematical guarantees on the optimality of the solutions generated by these procedures.

Early work on the coverage path planning defines the planning task as finding a robot trajectory that covers the entire space of interest \citep{choset1998coverage}. The inspection requirement is simplified to the point that a collection of points must be visited by the robot, and thus the focus is only on the robot trajectory \citep{galceran2013survey}. 

In contrast \cite{bircher2016three} propose a sampling-based procedure that considers both inspectability of a path and its length. However, their procedure has no formal optimality guarantees and is specifically designed for UAVs. \cite{roberts2017submodular} formally model the inspection problem as an optimization problem, and use a branch-and-bound solver on a relaxation of this problem. The branch-and-bound solution method, however, limits the size of the problem instances that can be solved. The relaxation that is performed also breaks all optimality guarantees.  

\cite{englot2012sampling} propose a sampling-based procedure that is guaranteed to converge to the optimal solution. The main disadvantage of this approach is that the rate of convergence is unknown and most likely slow, especially considering that the inspection planning problem is typically a large NP-hard problem. Another disadvantage is that the inspection quality is treated as a binary variable. In reality, however, the inspection quality depends on the measurement conditions. \cite{papadopoulos2013asymptotically} propose an algorithm with similar properties that extends to robots with more challenging constraints, such as a robotic manipulator. However, this approach is only of theoretical interest as it does not scale well to large problems.

\cite{singh2009nonmyopic} model the inspection task as maximizing the mutual information of a Gaussian Process, and present a near-optimal algorithm to find a walk in a graph with the aim to maximize the inspection quality. However, the proofs in this work rely on a very specific instance of submodularity specific to the mutual information of a Gaussian Process (i.e. $(r,\gamma)$-local submodularity). Furthermore, this algorithm has been shown to scale poorly to larger problems \citep{roberts2017submodular}.

A closely related problem, proposed by \cite{yu2014correlated} is the correlated orienteering problem (COP). This problem is used to model a persistent monitoring task using drones and can be solved by mixed integer quadratic programming. The disadvantage, however, is that the structure of the COP formulation is not rich enough to model realistic complexities typically related to functions modelling inspection quality. Another disadvantage is that this method was only shown to work with small problem instances (i.e. a workspace grid of 7$\times$7 nodes in 2D). 

In a previous paper, we proposed a local inspection path optimization approach \citep{bogaerts2018gradient}. This technique however relies on a good initial trajectory and does not provide any optimality guarantees. Local inspection path optimization approaches without guarantees can benefit from an initialization with formal quality guarantees.

\section{Abstract problem structure}
\label{sec::Abstract}
As mentioned earlier, the goal of robotic inspection planning is to find an efficient robot trajectory that allows an attached measurement device to perform high quality, complete measurements. It is a problem with at its core two conflicting interests. The first interest is to maximize the inspection quality while the second interest is to keep the robot trajectory efficient. Separately, both maximizing the inspection quality and finding a minimal trajectory are NP-hard problems. What counts as a cost differs per application but can for example be, travelling distance, inspection time, etc. We will cover both aspects, and how they are connected in the following sections.

In this work, we will assume that information about the object that is being inspected, the robot system, and the characteristics of the measurement device are known beforehand. Information about the measurement object is in the form of a CAD model. We also assume that a digital twin of the robot system in an accurate representation of the environment is available to make sure that the final trajectory is kinematically reachable. We finally assume that the characteristics of the measurement device, that quantify the expected measurement quality as a function of the measurement conditions are available. 

An instance of the inspection planning problem is in reality continuous. To make the problem tractable, we will discretize the problem to a discrete problem. Different aspects of the discretization are shown in Fig. \ref{fig:sheetSheet}. Each of these aspects will be discussed in separate subsections. The cardinalities provided in Fig.\ref{fig:sheetSheet} are used throughout this work, without repeating their meaning.
\begin{figure*}
    \centering
    \includegraphics[width=\textwidth]{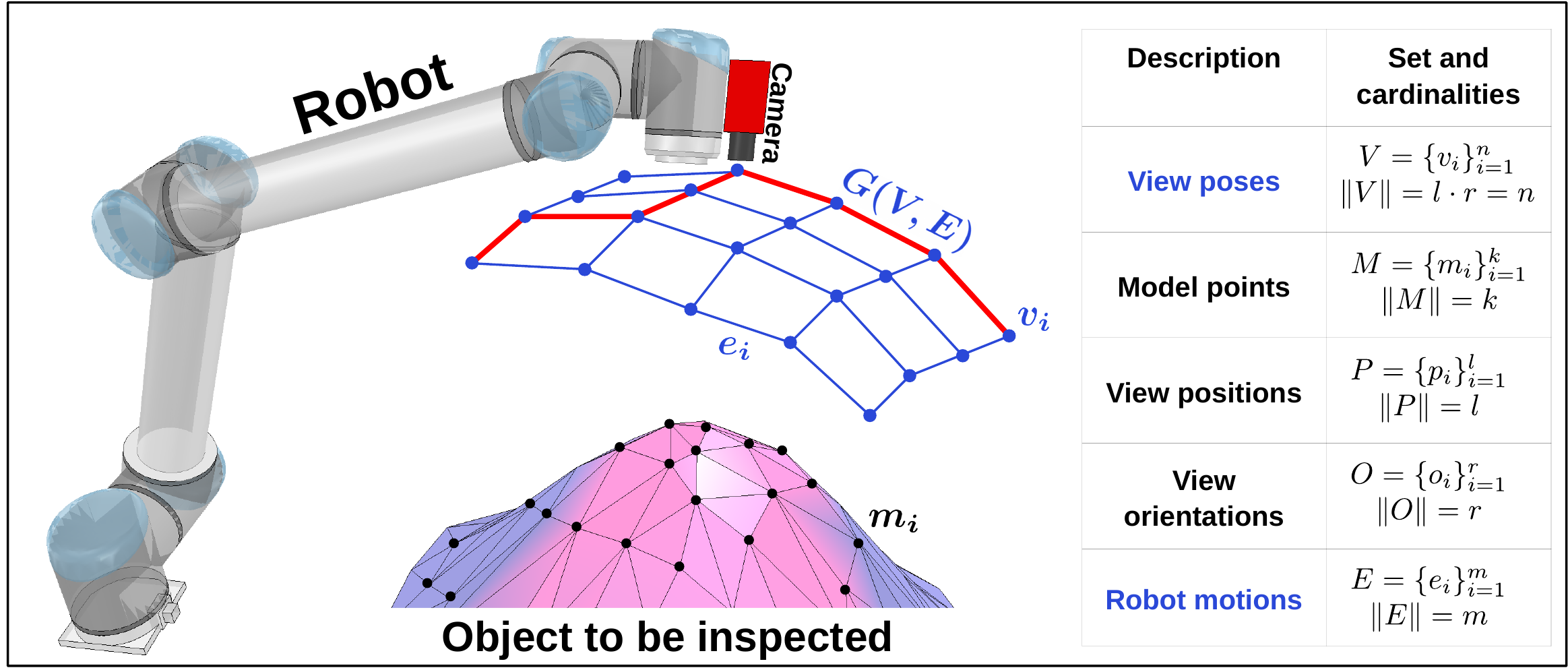}
    \caption{This figure connects the abstract structure of the inspection planning problem and its notation to their physical interpretation.}
    \label{fig:sheetSheet}
\end{figure*}

\subsection{Inspection quality}

In this section, we will show that functions that model inspection quality are naturally connected to submodular functions. This connection allows us to formally characterize the inspection planning problem, and study the effectiveness of algorithms. Specific instances of functions modelling inspection quality have already been shown to be submodular \citep{roberts2017submodular}, but a recent work on camera network performance models highlights the generality of this connection \citep{Bogaerts2019Vr}. Submodular set functions are set functions that are characterized by a \emph{diminishing returns property}. More formally:\\
Let $\Omega$ be a finite set, and let $f : 2^\Omega \longrightarrow \mathbb{R}$ be a real-valued function of subsets of $\Omega$, then $f$ is called
{\em submodular\/} if for each $X\subseteq Y \subseteq \Omega$ and for each $x\in \Omega\setminus Y$ it holds that 
\begin{equation}
    f(X \cup \{ x\}) - f(X) \geq f(Y \cup \{x\}) - f(Y)
    \label{eqn:submod}
\end{equation}

This diminishing returns property models the fact that inspecting some part of the object becomes less interesting after it has already been inspected, in the context of inspection planning. To relate functions that quantify inspection performance to submodular functions we start by discretizing the object that we wish to inspect $M=\{m_1,\ldots,m_k\}$ (model) into a finite collection of points. Furthermore, we assume the availability of a finite sample $V$ of relevant view poses for the inspecting camera, $V=\{ v_1,\ldots, v_n\}$. Each view pose $v\in V$ is a pair $(p,o)$ consisting of a 3D-point $p$ (position) and an orientation $o$ (Fig.\ref{fig:sheetSheet}). 
In Section~\ref{sec::tspCost} we explain how to obtain $V$ by a separate sampling of view positions $P$ and orientations $O$. For each model point $m\in M$ we can consider the subset view poses, $R_m=\{v\in V: m \mbox{ is visible from } v\}$. While $R_m$ represents a binary inspection quality of view poses in a given model point $m\in M$, we will also introduce more general quality rate functions $q_m\; : V\longrightarrow\mathbb{R}^+$, 
representing a more nuanced quality of the view poses in a given inspected model point (Section ~\ref{sec::obtainQual}). In this setting, we agree that high values $q_m(v)$ correspond to high inspection quality by $v\in V$. Furthermore, we assume that $q_m(v)=0$ if $v\not\in R_m$. 

In general, the total inspection quality for a given model point $m$ is the accumulation of the inspection rates $q_m(v)$ by the fusion of the sensor data captured at several view poses, visited during the inspection trajectory. Locally at each model point $m\in M$, this sensor accumulation is formally ruled by a function $G_m: 2^V\times V\longrightarrow \mathbb{R^+}$. More precisely, for each subset $X$ of view poses and for each individual view pose $v\in V$, the nonnegative value $G_m(X,v)$ represents the added inspection quality at $m$ by adding $v$, relative to previous inspections by $X$. Such a {\em marginal quality function\/} $G_m$ is associated with a given family of quality rate functions ${\cal F}=\{q_m\; |\; m\in M\}$, and we call $G_m$ an {\em unordered and decreasing\/} marginal quality function if the following conditions are satisfied:
\begin{enumerate}
\item $G_m(\emptyset,v)=q_v(m)$ for each $v\in V$ (initialized by ${\cal F}$)
\item $G(X,v)=0$ if $v\in X$ (saturation, i.e. measuring a model point under exactly the same conditions twice, is not beneficial.)
\item If $X=\{v_1,\ldots,v_s\}\subseteq V$ than the sum
$$G_m(\emptyset,v_1)+G_m(\{v_1\},v_2)+\cdots G_m(X\setminus\{v_s\},v_s)$$
is independent of the chosen order in $X$. (unordered)
\item Extending the set of view poses $X\subseteq Y\subseteq V$, decreases the relative benefits: $G_m(Y,v)\le G_m(X,v)$. (decreasing)
\end{enumerate} 

Now we can state the inspection objective as a submodular function on subsets $X\subseteq V$ of view poses. More precisely, for a fixed $m\in M$, we formalize the inspection quality $f_m(X)$ due to viewpoints $X\subseteq V$ recursively by 
\begin{eqnarray}
f_m(\emptyset) &=& 0\\
f_m(X) &=& f_m(X\setminus\{ v\}) + G_m(X\setminus\{ v\},v)\;\;\optionalBreak\mbox{if }X\neq\emptyset\;\mbox{ and }v\in X
\label{eq:process}
\end{eqnarray}
Notice that $f_m(\{ v\}) = q_m(v)$.

{\bf Examples.}
\begin{enumerate}
\item For a given object point $m$ and an arbitrary family of quality rate functions ${\cal F}=\{q_m\; |\; m\in M\}$, we can define
\begin{eqnarray*}
G_m(X,v) &=& 0, \mbox{ if } q_m(v)\le\max\{q_m(w)\; |\; w\in X\},\\
G_m(X,v) &=& q_m(v) - \max\{q_m(w)\; |\; w\in X\},\optionalBreak \mbox{ otherwise.}
\end{eqnarray*}
that can be seen to be an unordered and decreasing marginal quality function, yielding the submodular inspection objective:
$$f_m(X) = \max\{q_m(w)\; |\; w\in X\}$$ 
\item If we use a binary quality rate, with $q_m(v)=1 \iff v\in R_m$, and if we define a binary $G_m$ by
$$G_m(X,v)=1 \iff v\not\in X\mbox{ , }v\in R_m{ \textrm{ and } }f_m(X\setminus\{v\})=0$$
then we just obtain $f_m(X) = 1 \mbox{ if }\#(X\cap R_m)>0$. Note that this exact objective is used frequently \citep{englot2012sampling,scott2009model,papadopoulos2013asymptotically,bogaerts2018gradient, Bogaerts2019Vr}.
\end{enumerate}

\begin{proposition}
Let $m$ be a fixed model point and let $q_m\; : V\longrightarrow\mathbb{R}^+$ be a given quality rate in $m$, supporting a marginal
quality function $G_m$. Then the associated
quality function $f_m\; :\; 2^V \longrightarrow\mathbb{R}^+$ is a well-defined, monotone increasing, submodular function.
\end{proposition}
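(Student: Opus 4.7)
The plan is to unfold the recursion defining $f_m$ into an explicit telescoping sum, then read off each of the three claimed properties from one of the four defining conditions on the marginal quality function $G_m$.

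First I would prove well-definedness by induction on $|X|$. If $X=\{v_1,\ldots,v_s\}$ is enumerated in some order, unrolling the recursion gives
\begin{equation*}
f_m(X) \;=\; G_m(\emptyset,v_1) + G_m(\{v_1\},v_2) + \cdots + G_m(\{v_1,\ldots,v_{s-1}\},v_s).
\end{equation*}
Condition~3 (unordered) says exactly that this sum is independent of the chosen enumeration, so the recursive value $f_m(X)$ does not depend on which $v\in X$ is pulled off in the recursive step, and by induction is a well-defined function on $2^V$.

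Once $f_m$ is seen to be well-defined, monotonicity is immediate: for $X\subseteq Y$, enumerate $Y\setminus X=\{w_1,\ldots,w_t\}$ and apply the recursion repeatedly to obtain
\begin{equation*}
f_m(Y)-f_m(X) \;=\; \sum_{i=1}^{t} G_m\bigl(X\cup\{w_1,\ldots,w_{i-1}\},\,w_i\bigr),
\end{equation*}
which is a sum of nonnegative terms since $G_m$ takes values in $\mathbb{R}^+$. Hence $f_m(Y)\ge f_m(X)$.

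For submodularity, given $X\subseteq Y\subseteq V$ and $x\in V\setminus Y$, I would use the recursion once on each side to get $f_m(X\cup\{x\})-f_m(X)=G_m(X,x)$ and $f_m(Y\cup\{x\})-f_m(Y)=G_m(Y,x)$, and then invoke condition~4 (decreasing), which states exactly $G_m(Y,x)\le G_m(X,x)$. This yields the diminishing-returns inequality \eqref{eqn:submod} for $f_m$. I do not foresee a real obstacle: the whole proposition is essentially a bookkeeping exercise that shows the four axioms imposed on $G_m$ were tailored precisely so that the induced $f_m$ inherits well-definedness from condition~3, monotonicity from nonnegativity of $G_m$ (implicit in the codomain $\mathbb{R}^+$), and submodularity from condition~4; the only care needed is to get the order of induction and the telescoping right.
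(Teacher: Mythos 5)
Your proof is correct and follows exactly the same route as the paper's (much terser) proof: well-definedness from the unordered condition, monotonicity from nonnegativity of $G_m$, and submodularity from the decreasing condition, with your telescoping sum simply making the paper's one-line justifications explicit. No gaps; nothing further needed.
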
 

\begin{proof}
First at all, $f_m$ is well-defined by the recursion of Eqn.~\ref{eq:process}, because $G_m$ is unordered.\\
Next, $f_m$ is monotone increasing because $G_m$ delivers positive values.\\
Finally, $f_m$ is submodular because $G_m$ is supposed to be decreasing.
\end{proof}

Of course, our ultimate goal is to maximize the inspection quality for the global object, represented by the sample $M$.
This motivates us to define 
$$f(X) = \sum_{m\in M} f_m(X)$$
which is monotone increasing and submodular as  a sum of monotone submodular functions.

\subsection{Travelling costs}
\label{sec::travellingCost}
In order to compute a travelling cost for subsets of view poses $X\subseteq V$, we start by representing the space of all robot motions as a finite connected graph $G(V,E)$ (see Fig.\ref{fig:sheetSheet}). For now, we will assume the existence of edges in this graph, in Section \ref{sec::tspCost} we will provide more details. The edges of this graph $E=\{e_1,...,e_m\}$ represent robot motions between view poses. Traversing an edge of this graph will result in an associated cost $c(e)$. For each collection of view poses $X\subseteq V$, $c(X)$ collects the costs of all edges on the  walk of minimum cost through $G$ that passes through all $v\in X$. We model the cost of a subset of view poses as:
\begin{equation}
    C(X) = c(X) + \alpha |X|
    \label{eq:alpha}
\end{equation}
Here, $\alpha$ models the cost associated with performing a measurement in each view pose. Note that in order to evaluate $c(X)$ we need to solve the well-known travelling salesman problem. In the remainder of this section, we will highlight some important aspects of the travelling salesman problem which will be used later in this work.

While the travelling salesman problem (TSP) is NP-hard, it is possible to quickly solve this problem for graphs of up to 1000 nodes using the Dantzig-Fulkerson formulation \citep{applegate2003implementing}. This formulation transforms the TSP problem in an integer linear programming problem that can be solved to optimality with branch-and-bound solvers. In our implementation, we make use of the Gurobi solver \citep{gurobi} to solve the integer linear programming problem.

Obtaining an exact solution to the TSP problem does however come at a computational cost. Our final algorithm requires the computation of many TSP problems (i.e. $O(n)$, $n$ is the number of view poses) making it too expensive for realistic problems. Therefore, we make use of an algorithm that generates approximate solutions to the TSP problem. Approximate solutions are solutions that are at most a constant factor larger than the exact solution. In this work, we make use of the nearest-neighbour algorithm that generates a $O(log(n))$-approximate solution to the TSP problem \citep{rosenkrantz1977analysis}. We will refer to the cost obtained by the nearest neighbour algorithm by $\widecheck{C}$ (the accent on C points downwards to indicate that the exact solution is smaller).

While the nearest neighbour algorithm generates an approximate solution, which can serve as an upper bound for the exact value, we also make use of a lower bound. This lower bound is known as the Held-Karp bound (HK). The HK bound is the solution to the relaxation of the linear programming formulation of the travelling salesman problem \citep{reinelt1994traveling}. \cite{held1970traveling} proposed an iterative approach to obtain this bound quickly. While the HK bound is only guaranteed to give a solution that is never less than 2/3 of the minimum cost, it in reality performs much better. The HK bound was shown to generate solutions to real problems with a gap of less than 1\% to the exact solution \citep{valenzuela1997estimating}. We will refer to the HK bound by $\widehat{C}$. \cite{valenzuela1997estimating} show that the HK bound can be reliably estimated with an algorithm with a time complexity of $O(n log(n))$. This algorithm works by generating a sequence of minimum 1-trees that converge to the linear programming relaxation of the TSP problem. 

\subsection{The submodular orienteering problem and the Generalized Cost-Benefit Algorithm}
The problem combining submodular function maximization subject to an upper bound on the travelling cost constraint is known as the submodular orienteering problem \citep{chekuri2005recursive}. This problem is formally given by:
\begin{equation}
    X^*=argmax_{X\subset V}\{f(X) | C(X)\leq B \}
    \label{eq:theproblem}
\end{equation}
Here, $B$ is the maximum allowed travelling budget. In the context of inspection planning, this problem aims to maximize the inspection quality with a constraint on the maximum inspection cost. \cite{zhang2016submodular} propose the generalized cost-benefit algorithm (GCB) which is a polynomial time algorithm with provable approximation guarantees. The fact that the algorithm is polynomial is interesting in the context of inspection planning because typical problem instances tend to be very large. This algorithm guarantees a $0.5(1-e^{-1})$ ($\approx 0.32$) solution to the submodular orienteering problem relative to a value $f(\tilde{X})$, where $\tilde{X}$ is the optimal solution of $max\{f(X)|\widetilde{C}(X)\leq k B/\psi(n)\}$. Here, $k$ is a constant larger than, but close to 1 and $\widetilde{C}$ is a submodular function mimicking the behavior of the true cost function \citep{zhang2016submodular}. $\psi(n)$ is the approximation guarantee of the travelling salesman algorithm that is used in the GCB algorithm. This bound on the performance of the algorithm is however overly pessimistic, as we will show in our experiments. We discuss a tighter problem specific bound in Section. \ref{sec:tight}

The GCB algorithm is provided in Algorithm \ref{alg:algorithm1}. Also, notice that $O(n)$ instances of the TSP problem need to be solved in this algorithm which is the reason for resorting to approximate solutions. 

\begin{algorithm} 
  \begin{algorithmic}[1]
      \State $ X \gets \emptyset$
      \While{$C(X)\leq B$ }
        \ForAll {$ \:x\in V$}
            \State $\Delta^x_f = f(X\cup \{x\}) - f(X)$
            \State $\Delta^x_{\widecheck{C}} = \widecheck{C}(X\cup \{x\}) - \widecheck{C}(X)$
        \EndFor
        \State $x^* = argmax(\Delta^x_f/\Delta^x_{\widecheck{C}})$
        \State $X = X \cup \{x^*\}$
    \EndWhile
    \State \Return $X\setminus \{x^*\}$
  \end{algorithmic} 
  \caption{Generalized Cost-Benefit Algorithm (GCB)}
  \label{alg:algorithm1}
\end{algorithm}

\subsection{Improving the solution of the GCB algorithm}
\label{sec:improvement}
In this section, we will propose an algorithm that is performed on the solution of the GCB algorithm. This step is a variation of a step that is often implicitly performed in the submodular function community but rarely discussed explicitly. This step tries to replace elements of the final set if new elements have greater marginal function values than old elements. This step often has a significant impact on the final quality of the solution. Our variation of this step is provided in Algorithm \ref{alg:algorithm2}. 

   \begin{algorithm}[H]
  \begin{algorithmic}[1]
      \State $X \gets GCB$
      \State $\delta f \gets \infty$
      \While{$\delta f > 0$}
      \State $f_0\gets f(X)$
      \ForAll{$ \: x\in X$} 
          \State $X^- \gets X\setminus \{x\}$
          \State $X_t \gets V\setminus X^-$
        \ForAll{$ \:v\in X_t$}
            \State $\Delta^v_f \gets f(X^-\cup \{v\}) - f(X^-)$
            \State $\widecheck{C}^v \gets \widecheck{C}(X^- \cup \{v\})$
        \EndFor
        \State $v^* \gets lazy_{argmax}\{\Delta^v_f | C(X^-\cup \{v\})\leq B\}$
        \State $X\gets X \cup \{v^*\}$
    \EndFor
    \State $\delta f \gets f(X)-f_0$
    \EndWhile
    \State \Return $X$
  \end{algorithmic} 
  \caption{GCB$^+$}
  \label{alg:algorithm2}
 \end{algorithm}

\begin{algorithm}[H]
  \begin{algorithmic}[1]
    \State $C^* \gets B$
    \While{$C^*>B$}
        \State $v^* \gets argmax(\Delta^v_f | \widecheck{C}(X^-\cup \{v\})\leq B\})$
        \State $\Delta^{v^*}_f \gets 0$
        \State $C^* \gets C(X^-\cup \{v^*\})$
    \EndWhile
    \State \Return $v^*$
  \end{algorithmic} 
  \caption{$lazy_{argmax}$}
  \label{alg:algorithm3}
 \end{algorithm}

Our variation of this extra step is designed explicitly for the submodular orienteering problem. The usual step would replace elements with new elements if their cost-benefit ratio is higher. In our algorithm, the cost-benefit ratio will not be considered, but the focus will be on just the benefit. We however only consider elements that when added, result in a solution that has a cost lower than the budget. The benefits of this heuristic are twofold. The first benefit is that the Nearest neighbour algorithm is only precise over large instances of the TSP problem (i.e. $O(log(n))$), making the error in the estimation of the cost-benefit ratio relatively large. This error will be large because the gap over which decisions to include elements need to be made ($C(X^-\cup v) - C(X^-)$) can be relatively small. In our heuristic, we do not need to estimate the cost-benefit ratio. The other reason is that elements of $V$ with the largest marginal return could have been ignored during the optimization phase, because of a large associated cost relative to a partial solution. In our extension, these elements will be reconsidered with respect to a set that is close to the final trajectory.

In this algorithm, we also compute $\widecheck{C}^v$ (the HK bound), seemingly for no particular reason. Its use is however hidden in the $lazy_{argmax}$ step where $C^v$ (the exact cost) is computed (see Algorithm \ref{alg:algorithm3}). Since the HK bound is a lower bound, we can safely ignore all values with an HK bound that is greater than $B$. The practical tightness of the HK bound ensures that only a minimal number of more expensive $C^v$ problems need to be solved. 

\subsection{Obtaining a tight reference measure $OPT$}
\label{sec:tight}
As mentioned earlier, the guarantee of the GCB algorithm is not tight. In this section, we will provide details on how to compute a much tighter problem specific bound. We also suggest to use this bound to compare the quality of different algorithms. Since this bound can be computed for large scale realistic problems, it can also be used to quantify the performance of algorithms that do not have formal optimality guarantees. 

The GCB algorithm produces a partial solution at each time step $\{X_1,X_2,...,X_t, X_{t+1}\}$ until the budget $B$ is violated at step $t+1$. The value $OPT$ is then given by:
\begin{equation}
    OPT = \frac{f(X_{t+1})}{1-\prod_{k=2}^{t+1}\left ( 1 - \frac{c(X_k)-c(X_{k-1})}{B}\right )}
\end{equation}

\begin{proposition}
Let $X_{t}$ be the subset generated by the GCB algorithm, and $X_{t+1}$ be the first subset for which $C(X_{t+1})>B$ then OPT is a tighter problem specific bound than $\frac{f(X_t)}{\frac{1}{2}(1-e^{-1})}$ i.e.:
$$ \frac{f(X_t)}{\frac{1}{2}(1-e^{-1})}\geq \frac{f(X_{t+1})}{(1-e^{-1})}\geq OPT \geq f(\tilde{X}) \geq f(X_t)$$ 

Where $\tilde{X}$ is the optimal solution of $max\{f(X)|\widetilde{C}(X)\leq k B/\psi(n)$, and $k$ is a constant close to one, as defined by \cite{zhang2016submodular}.
\end{proposition}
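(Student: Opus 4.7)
The chain has four links. The rightmost, $f(\tilde{X}) \geq f(X_t)$, is immediate: by the GCB stopping rule $C(X_t) \leq B$, so $X_t$ is feasible for the problem defining $\tilde{X}$ (which uses the relaxed budget $kB/\psi(n)$ with the submodular surrogate $\widetilde{C}$), and optimality of $\tilde{X}$ delivers the inequality. The leftmost, $f(X_t)/[\tfrac{1}{2}(1-e^{-1})] \geq f(X_{t+1})/(1-e^{-1})$, is equivalent to $2 f(X_t) \geq f(X_{t+1})$; I would derive this from submodularity, since $f(X_{t+1}) - f(X_t) \leq f(\{x^*\})$ for the element $x^*$ added at step $t+1$, combined with the standard Khuller--Moss--Naor convention that the GCB output implicitly dominates every singleton, so that $f(\{x^*\}) \leq f(X_t)$.

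The key new content is the middle pair of inequalities. For $f(X_{t+1})/(1-e^{-1}) \geq OPT$, I would apply $1-x \leq e^{-x}$ factor-wise to the denominator of OPT and telescope the sum $\sum_{k=2}^{t+1}[c(X_k)-c(X_{k-1})] = c(X_{t+1})$ (taking $X_1 = \emptyset$, so $c(X_1)=0$) to obtain
\begin{equation*}
\prod_{k=2}^{t+1}\left(1 - \frac{c(X_k)-c(X_{k-1})}{B}\right) \leq \exp\!\left(-\frac{c(X_{t+1})}{B}\right).
\end{equation*}
Because the GCB loop exited at step $t+1$ precisely because the budget was exceeded, $c(X_{t+1}) \geq B$, so the product is $\leq e^{-1}$ and the denominator $1 - \prod \geq 1-e^{-1}$. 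Dividing $f(X_{t+1})$ by this smaller quantity yields the larger value $f(X_{t+1})/(1-e^{-1})$, giving the second inequality.

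The heart of the proposition is $OPT \geq f(\tilde{X})$. For this I would invoke the cost-benefit greedy step lemma of Zhang and Vorobeychik applied to the GCB trajectory: at each iteration
\begin{equation*}
f(X_k) - f(X_{k-1}) \geq \frac{c(X_k)-c(X_{k-1})}{B}\bigl(f(\tilde{X}) - f(X_{k-1})\bigr),
\end{equation*}
which rearranges to $f(\tilde{X}) - f(X_k) \leq (1-\Delta c_k/B)(f(\tilde{X}) - f(X_{k-1}))$. Iterating from $X_1 = \emptyset$ up to $X_{t+1}$ yields $f(X_{t+1}) \geq f(\tilde{X})\cdot\bigl(1-\prod_{k=2}^{t+1}(1-\Delta c_k/B)\bigr)$, which upon dividing through is precisely $OPT \geq f(\tilde{X})$. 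The main obstacle is this step: the greedy-step lemma is delicate because the TSP cost $c$ is not modular and the algorithm actually selects elements using the approximate $\widecheck{C}$ rather than $c$, so one cannot rederive it cleanly in a few lines. It must be imported as-is from Zhang and Vorobeychik, with the budget relaxation $kB/\psi(n)$ and the submodular surrogate $\widetilde{C}$ absorbing those approximation errors into the very definition of $\tilde{X}$.
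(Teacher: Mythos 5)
Your proposal follows the same route as the paper: the paper's proof is just the observation that every link in the chain is read off from the proof of Theorem~1 of Zhang and Vorobeychik (2016), and your reconstruction --- the per-iteration cost-benefit lemma yielding $OPT \geq f(\tilde{X})$, the telescoping plus $1-x\le e^{-x}$ plus budget-violation step yielding $f(X_{t+1})/(1-e^{-1}) \geq OPT$, and the Khuller--Moss--Naor-style singleton argument supplying the factor $\tfrac{1}{2}$ --- is exactly the structure of that proof, so the two arguments coincide. The only caution is that the two steps you gloss (singleton domination $f(\{x^*\})\le f(X_t)$, and feasibility of $X_t$ for the surrogate problem with budget $kB/\psi(n)$, which does not follow merely from $C(X_t)\le B$ given that the paper's stopping rule uses $C$ while $OPT$ is written with $c$) are assumptions imported from that reference rather than consequences of Algorithm~1 as stated --- but the paper's one-line proof makes exactly the same appeal, so this is not a gap relative to it.
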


\begin{proof}
This statement follows directly from the proof of Theorem 1 of \cite{zhang2016submodular}.
\end{proof}

The factor $(1-e^{-1})$ is incidentally also the performance of the greedy algorithm on the monotone submodular function maximization under cardinality constraints \citep{Krause12submodularfunction}. This bound is known to be tight. This suggests that $OPT$ is also tight since \cite{zhang2016submodular} use the same strategy that is used to prove the boundedness of the greedy algorithm, in their proofs. Tightness implies that it is not possible to find a tighter bound without making more assumptions on $f$. The most popular structure in submodular functions, namely curvature, does for example not apply to submodular functions modelling inspection performance.

It is important to stress that the guarantees from which $OPT$ is derived do not include the actual cost $C$. These guarantees depend on a placeholder submodular function $\widetilde{C}$ that captures important characteristics of the true cost function $C$ \citep{herer1999submodularity}.
\section{Practical implementation}

In Section \ref{sec::Abstract}, we discussed the abstract structure of the inspection planning problem and presented an algorithm to solve it. In this section, we will discuss how we can use this approach to solve a practical inspection planning problem. Important in our practical implementation is that we want to keep the theoretical guarantees from Section \ref{sec::Abstract} intact. On the other hand, we also require that the theoretical assumptions make sense in the context of practical problems. 

Another important aspect is that we strive for generality. This means that the implementation should be able to solve planning problems for drones and robotic manipulators. Furthermore we cannot make particular assumptions on the quality function $f$ or cost function $C$ which need to tailor to a wide variety of specific problems. 

\subsection{Obtaining TSP costs}
\label{sec::tspCost}
An important aspect of robotic inspections is the robotic system that generates the motion of the measurement device. So it is no surprise that a practical implementation is built around dealing with this system efficiently. In this work, we assume that we can query the validity of robot states in a robot simulation environment. This validity entails that the robot can move the measurement device to a certain position and orientation without colliding with the environment. In our implementation, we use the robot simulation environment V-REP \citep{vrep}. A first step in encoding the robot limitations efficiently is to construct a view pose space discretization.  

To guarantee a general approach, we will perform the discretization of the view poses in task-space. The alternative, discretizing configuration-space would result in a discretization with low quality in the case of high dimensional state spaces. We will start by defining a specific discretization approach, and proceed by pointing out the advantages of this discretization.

The discrete position set $P$ is generated by constructing a regularly spaced grid in the inflated convex hull of the object that needs to be inspected. All the points that cannot be reached by the robot in any orientation are removed from the set $P$. Orientations are generated randomly and uniformly \citep{shoemake1992uniform}, yielding the finite set $O$..
The path planning is than executed on a graph $G=(V,E)$, where each vertex $v$ in $V$ corresponds to {\em fiber\/} $p^O=\{p\}\times O$ that represents all the poses in one specific position $p\in P$, considering each discrete orientation. Two such vertices $p_i^O$ and $p_j^O$ are connected by an edge $e_{i,j}\in E$ if and only the positions $t_i$ and $t_j$ are connected by a grid edge or a grid diagonal. Note that the reachability of positions at specific orientations was not evaluated during the construction of graph $G$. We will however check the reachability of positions at specific orientations in a lazy fashion at a later stage. This choice to only check the reachability of positions (rather than poses) results in a reduced number of cheaper reachability queries ($O(l)$ vs $O(n)$, see Fig.\ref{fig:sheetSheet}) to the robot simulator. The cost between these two view poses $(p_i,o_i)$ and $(p_j,o_j)$ is modelled as an edge cost given by:
\begin{equation}
c(e_{i,j})=(1-\beta)d_t(p_i,p_j)+\beta d_o(o_i,o_j)
\label{eq:dist}
\end{equation}

Here $d_t$ and $d_o$ are metric distance functions (i.e. the triangle inequality is satisfied) between positions and rotations, and $\beta$ is a weighting parameter. The main consequence of this modelling choice is the fact that the cost between orientations and positions are separated. Practical cost functions also motivate this choice. For inspections with drones, this type of cost function can model the total travelling cost, which must be bounded due to battery life. For inspections using robotic manipulators, this cost is proportional to the distance travelled by the measurement device, of which the derivative (i.e. speed) is limited. An example where this cost function fails is when energy consumed by a robot manipulator is considered as a cost. These types of costs are however very unlikely to play a role in robotic inspection problems, because the speed of the inspection is the main concern in the economics of robotic inspections.

Now that edge costs $c(e)$ are defined, we can run all TSP related algorithms on graph $G$. The discretization and distance matrices are pre-computed for both positions and orientations. The computation of the distance matrix in the case of the positions requires the computation of all shortest paths between every combination of positions. With Johnson’s algorithm \citep{Johnson1977} this step has a time complexity of $O(l^2\log(l) + l^2)$. Note that we made use of the fact that the number of edges is $O(l)$ in our proposed graph. Also
note that the choice to separate positions from orientations resulted in a highly reduced computational cost. In our
experiments, we will show that in large problem instances, this computational cost is acceptable. and can be used by TSP queries which are performed interactively during the optimization phase. This means that a cost $c(e)$ is only computed when needed. For every subset of view poses in a TSP query, a sub-matrix can quickly be extracted as the sum of a subset of the two distance matrices.

\subsection{Obtaining measurement quality}
\label{sec::obtainQual}
While the calculation of the TSP costs already required the discretization of the view pose space, calculating the measurement quality also requires the discretization of the object that needs to be inspected. Many methods exist that accomplish this, and the right choice depends on the application. In all our experiments we will randomly and uniformly sample the input mesh. The surface normal associated with each point in the discretization is inherited from the triangle of which the point was sampled. This normal can later be used in determining the expected measurement quality from different viewpoints. Determining the measurement quality for each view pose-surface point combination is achieved in three steps.

In the first step, the visibility is calculated using a highly efficient ray-tracer \citep{Embree}. Note that because the orientations and positions of the viewpoints are independent, we only need to perform $l\times k$ visibility computations. The second step evaluates for each visible view pose-surface point combination, for which orientations of the measurement device, the surface point is in the view frustum of the sensor located in the view pose. Finally, the expected measurement quality is calculated for the remaining view pose-surface point combinations. These quality values are pre-computed and stored in a sparse matrix. How this quality is computed is dependent on the physics of the measurement technique that is being used, and thus highly dependent on the measurement technique. In our implementation, we make use of a GPU to quickly evaluate the marginal benefit of adding a viewpoint in parallel. We do this mainly because the time complexity of this evaluation is $O(nk)$. In our largest experiment for example $n=351.10^3$ and $k=30.10^3$ (ignoring the sparsity). 

\subsection{Optimization}
The optimization phase uses the aforementioned pre-computed quality values and distance matrices. Given the pre-computed data it is straightforward to implement Algorithms \ref{alg:algorithm1} and \ref{alg:algorithm2}. The final measurement path returned by the algorithms is however not guaranteed to be executable by the robot. Note that the following limitations of the robot were neglected during the construction of the problem:
\begin{enumerate}
    \item Reachability of specific orientations at positions is not guaranteed.
    \item It is not guaranteed that the robot can execute each path that is considered by the optimizer.
\end{enumerate}
The reachability of specific orientations will be checked lazily just before adding new points to the solution set. This reachability check is performed in a robot simulator. If the pose of this point is not reachable by the robot, it will be ignored, and the next best point will be considered. 

The second point is more challenging and presents a fundamental challenge related to solving the inspection planning problem. Especially for robotic manipulators, it is expensive to perform path planning queries of long, complex paths in cluttered environments. This path planning requires checking the capability of a robot to execute consecutive linear motions. The sufficient number of such queries that are needed to keep the theoretical guarantees intact, would result in prohibitive computation times. This is mainly because the number of possible trajectories under consideration is exponential in the size of the input graph. Our final algorithm will neglect this check. Thus the near-optimality only remains intact if the following assumption is valid.
\begin{assumption}
If every $p\in P$ is reachable in at least one orientation $o\in O$ by the robot, and every view pose $x\in X\subseteq V$ is reachable (position + orientation), then we assume that the robot can move the sensor along the trajectory that minimizes $C(X)$ without incurring additional costs.
\label{ass:main}
\end{assumption}
This assumption is almost surely guaranteed in the case of drone inspections, if the resolution of the discretization is sufficiently fine. The view pose discretization that we proposed is aimed towards maximizing the probability that this assumption is also valid in other cases (e.g. manipulators). The grid structure of the view pose space graph ensures that the maximal spatial distance between reachability checks is limited to the diagonal distance of the input grid. The capability of our approach to deal with large graphs ensures that grids with a fine spacing can easily be achieved. Also, note that from a theoretical perspective this assumption only has to hold for the final solution. Partial solutions that violate Assumption \ref{ass:main} do not imply that the final solution will violate this assumption, or that $OPT$ will be violated. 

An improved implementation is however possible. This implementation would check all configurations in the complete path (i.e. the path that minimizes $C(X)$) in each iteration with a specific orientation. However, when such a path is not entirely reachable, complicated data structures are required to reflect this knowledge. This data structure is required because robotic manipulators can have multiple inverse kinematics solutions, which makes the reachability dependent on previous states. This would result in a substantial increase in the complexity of the implementation, and is therefore not considered in this work. This choice to omit this step is also motivated by our experiments where Assumption \ref{ass:main} was not violated.

\section{Experiments}
We will perform three different experiments in this section. The first experiment aims to evaluate the quality of solutions generated by the proposed algorithm in a variety of complex inspection tasks. The variety of problems arises from combining different robotic systems, different quality functions and different inspection objects. The second experiment evaluates the robustness of the GCB algorithm and post-processing step to changes in problem defining parameters. The third experiment aims to show that the proposed approach can solve highly complex, large scale inspection tasks. In this experiment, the focus is less on flexibility and more on the size of the problems.

To keep the variety of problems manageable we limit the number of quality functions to two specific functions. These functions are displayed in Fig.\ref{fig:covQual}. The first quality function models an inspection task where only coverage is important. Thus the measurement quality is not dependent on the measurement conditions. We, however, define a maximum measurement angle which is 30$^{\circ}$ in Fig.\ref{fig:covQual} (left) after which the quality becomes zero. Function $G$ (see Eq.\ref{eq:process}) is in both quality functions the $max$ operation. The second function is more complex and is given by:

\begin{equation}
    A = \frac{cos(\gamma)}{r^2}
\label{eqn::qual}
\end{equation}
Where $\gamma$ is the angle between the vector pointing from point $m$ to view pose $v$ and the surface normal and $r$ is the distance between these points. This function is proportional to the projected area of a surface element located at $m$ measured from $v$. It is important that that there is a minimum measurement distance that limits measurements that are too close. Note that any linear scaling of the distance leaves the problem unchanged. In experiment 1 we will consider both quality functions, and in experiment 2 we will only consider the second quality function.  

Both functions will result in fundamentally different submodular functions $f$. Function $C$ will result in a $f$ that can feature many nonlinearities since $q_i$ can only be zero or one. This will result in a more submodular $f$ (i.e. higher submodular curvature). $A$ will result in a smoother $Q$ since $q_i$ can take on all positive values. 
\begin{figure*}
    \centering
    \includegraphics[width=\textwidth]{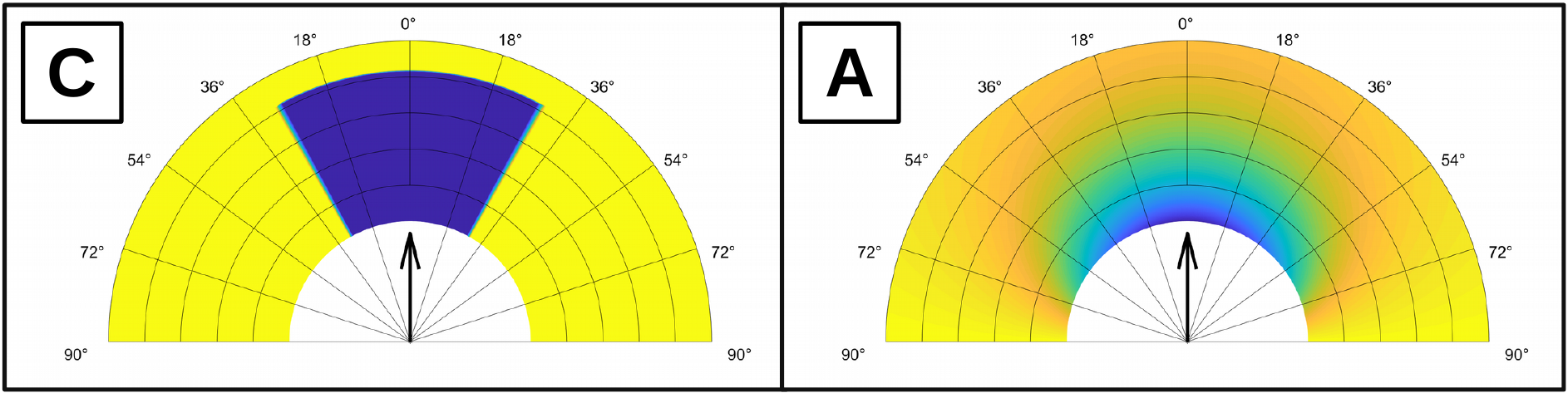}
    \caption{These plots show the quality functions that are used in the experiments. The black arrow in each figure represents the surface normal, and the colors display the quality value if a camera is located at that position pointing towards the surface element (located at the base of the black arrow). A high measurement quality is displayed as a blue color, and a low-quality value is shown as yellow. The left image is a standard coverage quality function, thus one if the measurement conditions are acceptable and zero otherwise. The right image shows a quality function proportional to the projected surface area of a surface element which is a smooth function.}
    \label{fig:covQual}
\end{figure*}

\subsection{Performance evaluation}
\begin{figure*}
    \centering
    \includegraphics[width=\textwidth]{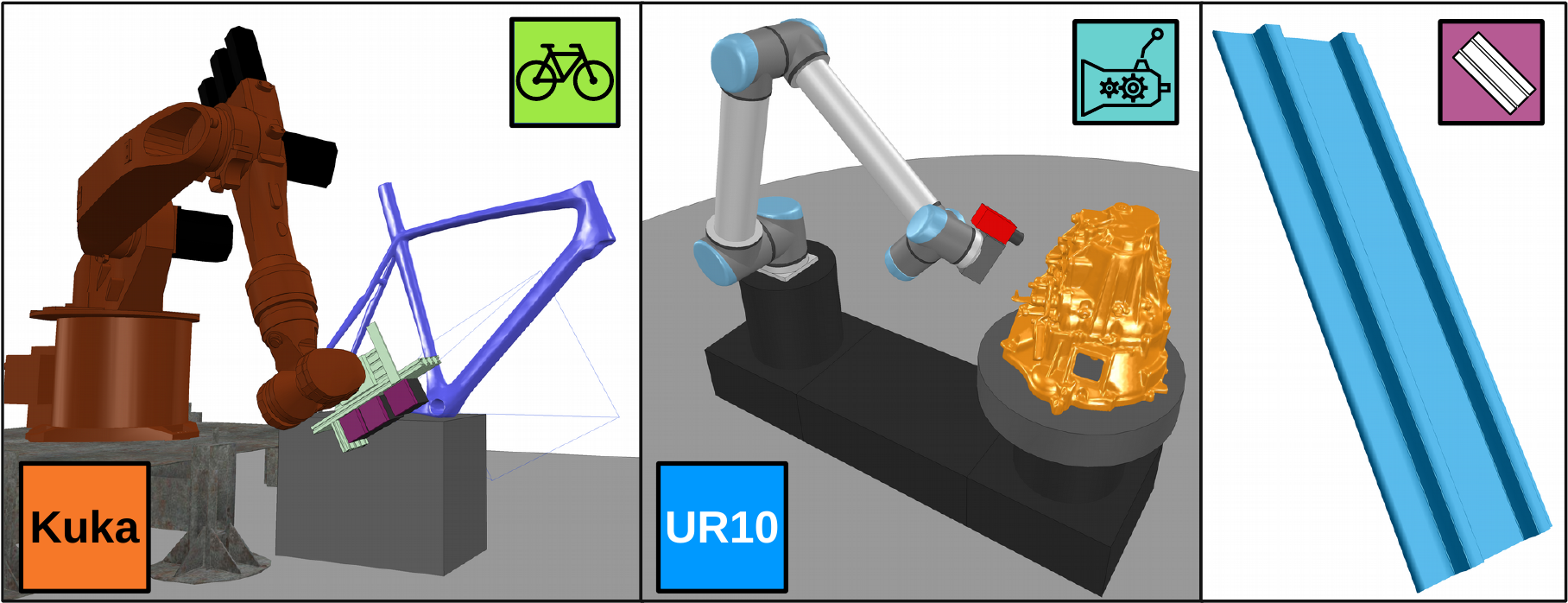}
    \caption{This figure shows the robot systems and objects that are considered in the experiment. The first measurement system consists of a Kuka KR16 robot with 6DoF, and a complicated measurement system. The second system consists of a smaller camera attached to a UR10 robot with 6DoF and an additional rotation table (7DoF in total). The first system represents a very constrained system while the second system is very flexible. The three measurement objects are a bicycle frame, a transmission housing and a plate with elevations.}
    \label{fig:simple}
\end{figure*}

During this experiment, we test the performance of the proposed algorithm in a wide range of problems. We create different problems by combining two different measurement systems, measuring three different objects with two different quality functions (creating 12 distinct problems). As a reference technique, we consider the traditional method that starts by building a set of viewpoints using the greedy algorithm, ignoring travelling costs, and proceeds by connecting them via the shortest path. This algorithm is also applied to the discretization proposed in this article. We will refer to this technique as the greedy algorithms which refers to the construction of the viewpoint set. We run the improvement algorithm of Sec.\ref{sec:improvement} on both the result of the GCB algorithm and the greedy algorithm. We will distinguish between both the original solution and the improved solution by adding a, +, sign. We will compare the performance of algorithms by percentages $Q(X)/OPT$ (Section \ref{sec:tight}).

The two measurement systems in this experiment represent a constrained system and an unconstrained system (see Fig.\ref{fig:simple}). The Kuka system features a large measurement device that limits the movements of this robot significantly due to complex self-collisions. This results in a reduced reachability of the manipulator. The UR10 system with rotation table has 7DoF in total, which makes it a less restricted system. Furthermore, the measurement device is smaller resulting in fewer self-collisions. The three objects that need to be inspected are fundamentally different. The bicycle frame is a large object with a tube structure. The transmission casing is a nearly convex object that still features complex occlusions. The plate object is an object that can be efficiently measured by a simple path that moves back an forth over the object.  

To limit the number of problems, we set all other problem-related parameters to be equal. However, we consider the effect of changing several important parameters in the next experiment. Parameter $\alpha$ in Sec.\ref{sec::travellingCost} is chosen to be 0.05. This means that performing a measurement costs as much as travelling 50mm with the end-effector. This choice reflects that we focus on continuous scanning, which requires that the robot at least slows down the speed of the end-effector surrounding any area of measurement. Distance function $d_t$ introduced in Eq.\ref{eq:dist} is the Euclidean distance between points in meters. Function $d_o$ is the angle in radians of the axis angle representation of the rotation between orientations. Parameter $\beta$ is chosen to be 0.01.

The spatial view position disctetization is obtained by dilating the convex hull of the object under consideration by 400mm. The resolution of the grid is 50mm in any dimension. Any unreachable point is rejected in advance in accordance with the technique proposed in Sec.\ref{sec::tspCost}. In this experiment we will mainly focus on the quality of the solutions returned by all algorithms. We will not focus on the execution time of each algorithm because our implementation is aimed towards flexibility rather than speed. Thus, all times that are provided are only suited to indicate execution times. The budget was chosen to 10 for all experiments but was increased until the total coverage of the trajectory was saturated. This ensures that the path is long enough to at least cover the entire object.

The perspective angle of the measurement device is 45$^{\circ}$ (arbitrary choice) in all experiments, except for the experiments where the plate object is being measured. The perspective angle in these experiments was changed to 15$^{\circ}$. This distinction was introduced to force to optimizer towards trajectories that pass multiple times over the plate object. While the maximum measurement angle in the coverage function is 30$^{\circ}$ in nearly all cases, it is 10$^{\circ}$ in case of the plate object. The results of this experiment are displayed in Fig.\ref{fig:table}. 

\begin{figure*}
    \centering
    \includegraphics[width=\textwidth]{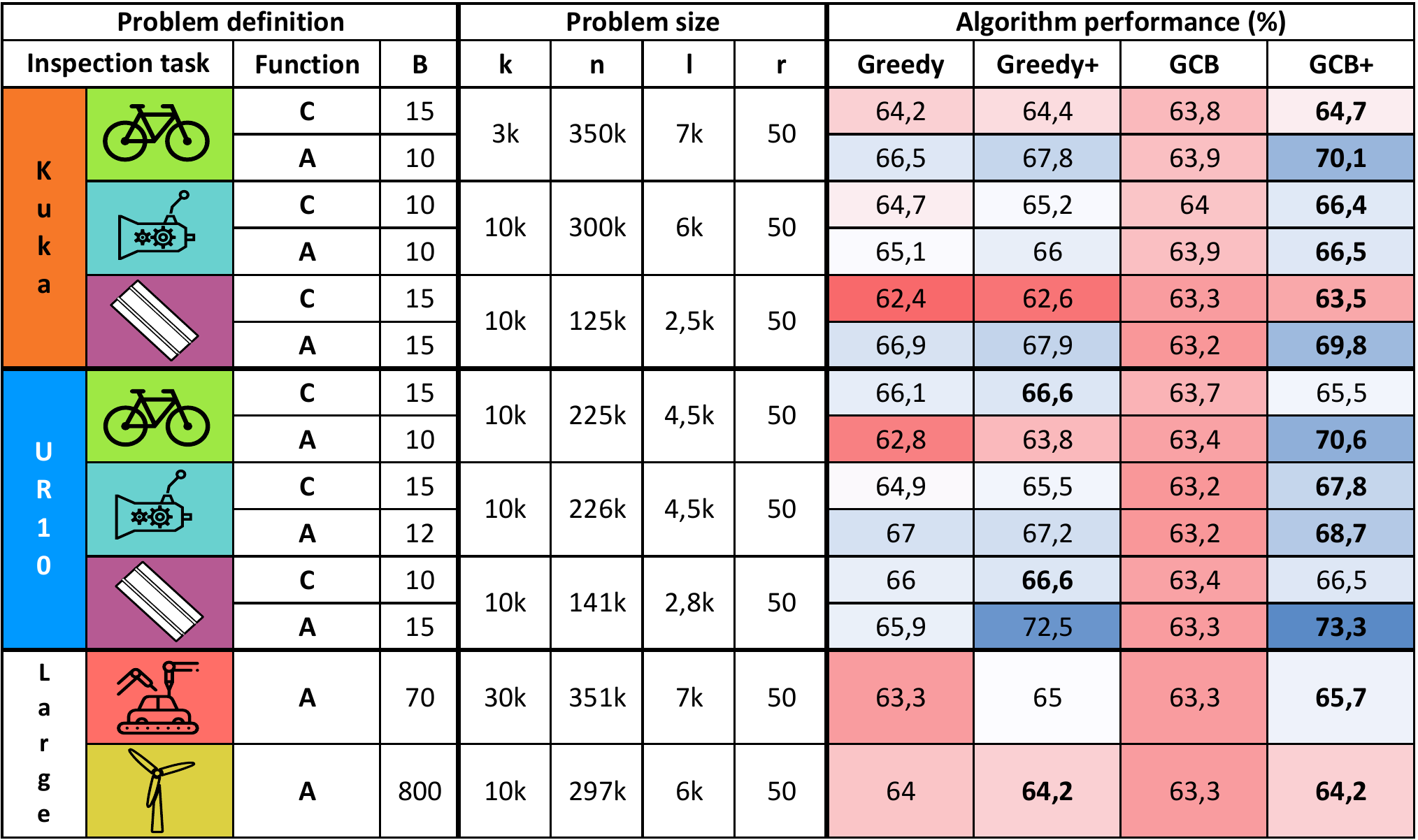}
    \caption{Figure summarizing the results that were obtained for the first and final experiment. The icons are introduced in Fig. \ref{fig:simple}, Fig.\ref{fig:covQual} and Fig. \ref{fig:tough}. The letters in problem size were introduced in Fig.\ref{fig:sheetSheet}. All the numbers under algorithm performance are percentages of $OPT$. Greedy is the result of running the traditional approach that separates the viewpoint collection (using the greedy algorithm) and path planning step. The $+$ refers to feeding the solution of the original approach to the improvement step introduced Sec. \ref{sec:improvement}.}
    \label{fig:table}
\end{figure*}
The maximum time required for precomputing the discretization in the smaller scale experiments was 10 minutes. The maximum time required to execute any algorithm was 1 hour. All these times are acceptable in real-world applications, especially since very large graphs were considered (i.e. 350k nodes) in this experiment. 

A first thing that is noticeable from Fig.\ref{fig:table} is that the traditional greedy algorithm performs surprisingly well. It almost always outperforms the basic GCB algorithm. This means that important viewpoints can almost always be connected with an efficient path. The range of different problems that were considered in these experiments indicates that this can safely be assumed in the design of new algorithms. The GCB+ algorithm almost always outperforms any other algorithm. However, the margin can be small. The degree to which GCB+ outperforms other methods is also dependent on the problem that is being considered. This observation also indicates that a proper evaluation of new algorithms requires the considerations of many different problems. Another observation is that the GCB+ algorithm performs better with the $A$ quality function. The smoothness of $A$ results in a smoother $f$ which is better suited to the local nature of the $^+$ step (Section \ref{sec:improvement}). 

\subsection{Robustness analysis}
In the previous experiment, we fixed some parameters that defined the inspection problem. In this experiment, to evaluate the stability of our algorithms, we will change these parameters to see the effect on the quality of the solutions generated by automated algorithms. In the previous experiment, we iteratively defined a travel budget for each experiment. To get an idea about the effect of this choice, we will change $B$, and keep all other parameters the same. In submodular orienteering the notion of cost is central. Therefore we will change both parameter $\beta$ (Eq.\ref{eq:dist}) and $\alpha$ (Eq.\ref{eq:alpha}) to see the effect of different types of cost functions. Parameter $\alpha$ determines the cost of a measurement, and $\beta$ determines whether orientation related costs, or position-related costs dominate. When $\alpha$ is changed, we also augment the budget. We augment the budget in such a way that the result of the greedy algorithm will remain the same. The main reason for this augmentation is that the GCB algorithm will always have a solution with more elements than the greedy algorithm. Changing $\alpha$ will indirectly penalize the number of elements in the final solutions. So changing $\alpha$ will test the ability of the GCB algorithm to deal with this changing context. 

We will run the algorithm with the different parameters on two different problems. These are both the problem that proved to be the most challenging, and the problem that proved to be the least challenging. Both problems incidentally considered the inspection of the plate object (see Fig.\ref{fig:simple}). This object was inspected by the Kuka kr16 robot, using the quality function $C$ in the most challenging problem. In the least challenging problem, this object was inspected by the UR10 robot with the quality function $A$. The results of these experiments are provided in Fig.\ref{fig:stability}.

\begin{figure*}
    \centering
    \includegraphics[width=\textwidth]{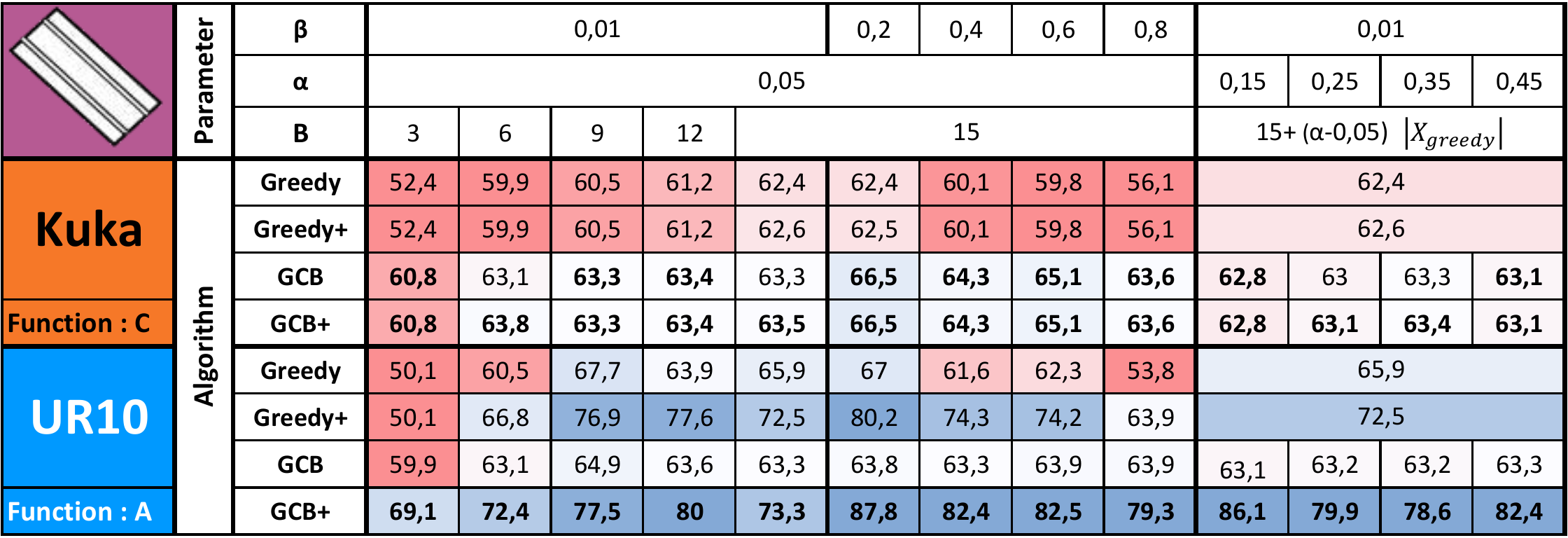}
    \caption{Figure showing the effect of changing problem defining parameters $\alpha$ (Eq.\ref{eq:alpha}), $\beta$ (Eq.\ref{eq:dist}) and the budget $B$ (Eq.\ref{eq:theproblem}). In this experiment only the best and the worst problems of previous experiment where considered.}
    \label{fig:stability}
\end{figure*}

It is clear that the performance of the GCB algorithm is robust to changing all investigated parameters. The only notable exception is when the budget is too small. Our augmentation step is however able to recover from this is the less challenging problem. The greedy algorithm is however far less robust to a change of parameters. 

\subsection{Large scale highly complex inspection tasks}
In this section, we will study the scalability of our approach by considering two very large complex yet distinct problems. The first problem considers the inspection of a car frame with a robotic manipulator on rails. This problem distinguishes itself by its very complex collision constraints due to complex collisions and self-collisions. The second problem considers the inspection of a 60-meter high wind turbine with a drone. This problem is different from the first problem since the trajectory of the drone is much longer. The quality of the solutions obtained in this experiment are included in Fig.\ref{fig:table}. 

The perspective angle of the measurement device in the car frame inspection problem is 40$^{\circ}$. The distances are computed precisely the same as in the first experiment. The convex hull of the car frame is dilated by 0.5m, and a grid with a resolution of 0.1m within this dilation is adopted to generate the viewpoint set. The car frame itself is a triangulation consisting of 373k triangles. To perform collisions, it is represented by an octree with 3525 voxels. Visibility is computed with the original mesh together with all other static scene meshes. Further details about the size of the problem are provided in Fig.\ref{fig:simple}. The drone in the wind turbine inspection problem is equipped with a camera with a perspective angle of 45$^{\circ}$. The wind turbine is provided as a triangulation of 103k triangles and is represented as a voxelization of 7k voxels to compute collisions. The convex hull of the wind turbine is dilated by 1m to create a viewpoint set with a resolution of 1m. 

\begin{figure*}
    \centering
    \includegraphics[width=\textwidth]{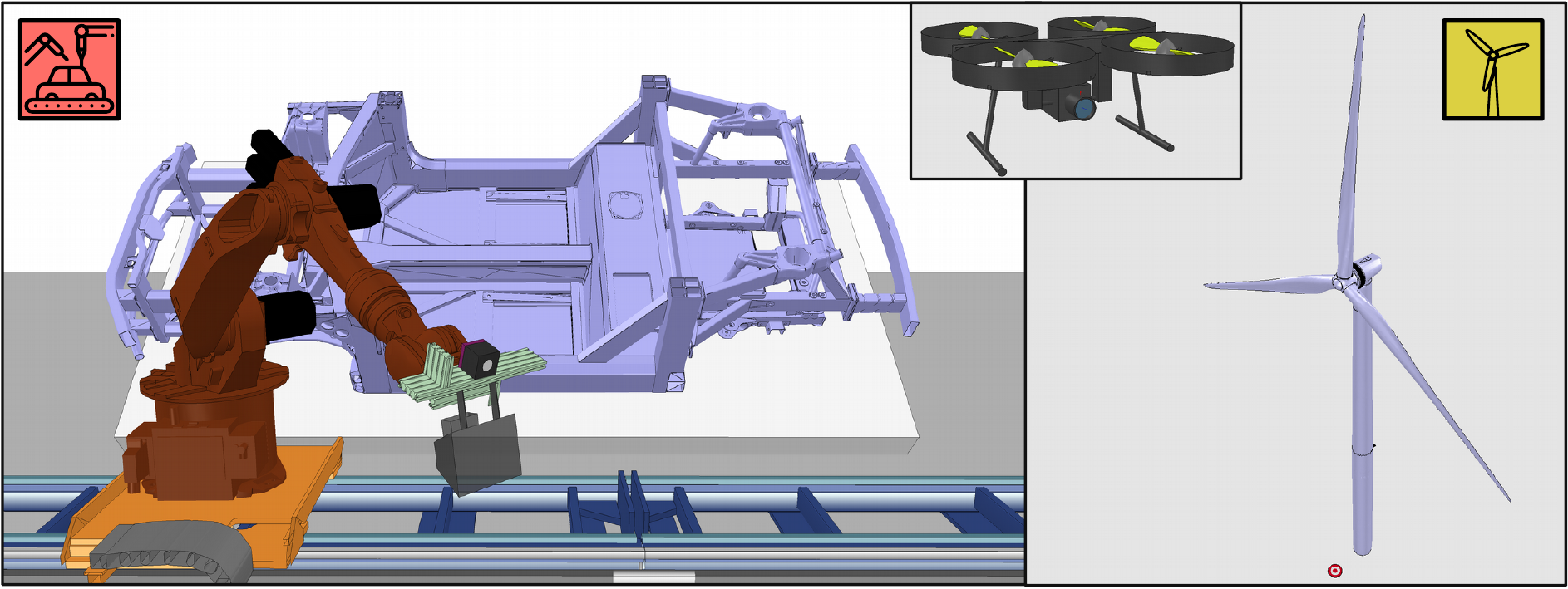}
    \caption{This figure shows the two complex inspection tasks considered in the experiment. The first inspection task entails the inspection of a car frame, with a complex measurement device. The movement of this device is provided by a Kuka KR16 robot placed on a linear translation stage. The second inspection task entails the inspection of a 60m high wind turbine with a drone. }
    \label{fig:tough}
\end{figure*}

The Greedy+ and GCB+ algorithms were allowed to run for 24 hours on both problems. The performance of the Greedy and GCB algorithms was extracted from running the + algorithms. Assumption \ref{ass:main} was found to be valid for both problems. For the turbine inspection problem, it was straightforward to find a path with linear connections between way-points. For the Car frame inspection problem it was more challenging because of the complexity of the robot system, self-collisions resulting from the measurement system and collisions from the complex car frame. To find this path we resorted to programming the path manually using a virtual reality programming approach\footnote{A video demonstrating this process is available online: \href{https://youtu.be/nakQGTs4Fs0}{https://youtu.be/nakQGTs4Fs0}}. The fact that we manually could find a path connecting all way-points using linear paths suffices to show the existence of such a path. 

The size of these problems is results from the size the input graph, and the length of the final path. The largest problem in terms of input graph is the car frame inspection problem, with an input graph of 351k nodes. Also notable is the size of the object discretization which is 30k points. The length of the path in this problem was 351 nodes of the input graph. The complexity of the car frame is also an important factor since interactive collision checks are more expensive. The path length of the wind turbine inspection problem is 520 nodes.
\section{Conclusion}

We started by linking the general robotic inspection planning problem to the submodular orienteering problem. While this connection was already noticed in specific examples before, the generality of this connection was never stressed. For the submodular orienteering problem, there exists an algorithm that solves this problem that provides formal mathematical guarantees. We extended this algorithm with a post-processing step to improve the solution even further. 

We investigated the assumptions about the real world problem that are required during the optimization phase and formalized this in an assumption. We proposed a discretization procedure for real-world problems such that this assumption is highly likely satisfied. This means that solutions to the abstract optimization problem (together with their guarantees) are valid in the practical context as well. The discretization procedure that we proposed is also able to deal with very large and complex problems as demonstrated in our experiments.

We subjected the proposed algorithms to a wide range of problems in which it almost always outperformed the traditional approach. We also showed that the algorithm was capable of solving highly complex inspection planning problems. The observation that our algorithm can provide solutions to complex inspection problems with mathematical guarantees, makes it an ideal reference method to benchmark more pragmatic algorithms that do not offer these guarantees. Furthermore, we showed that the advantage of having mathematical guarantees in the context of inspection planning resulted in a more robust algorithm.

\section*{Acknowledgements}
B.B was funded by Fonds Wetenschappelijk Onderzoek (FWO, Research-Foundation Flanders) under Doctoral (PhD) grant strategic basic research (SB) 1S26216N.

\bibliographystyle{agsm}
\bibliography{references}

\end{document}